\documentclass[review]{elsarticle}
\usepackage[utf8]{inputenc}
\usepackage{epstopdf}% To incorporate .eps illustrations using PDFLaTeX, etc.
\usepackage{placeins}
\usepackage{tabularx}
\usepackage[font=small]{subcaption}
\usepackage[numbers]{natbib}
\usepackage{multirow}
\usepackage{algorithm}
\usepackage{algpseudocode}
\usepackage{amsmath}
\usepackage{graphics}
\usepackage{epsfig}
\usepackage{amsthm}
\usepackage{mathrsfs}
\usepackage{amssymb}
\usepackage[font=small,skip=0pt]{caption}
%\renewcommand\thesubfigure{\normalfont(\alph{subfigure})}
%\usepackage{caption}
%\captionsetup[subfigure]{font=Tiny}%,labelfont={bf,sf}}
%\captionsetup[sub]{font=small}%,labelfont={bf,sf}}
 % Use Input in the format of Algorithm
 % Use Output in the format of Algorithm
\bibpunct[, ]{(}{)}{;}{a}{}{,}% Citation support using natbib.sty
% Bibliography support using natbib.sty

\theoremstyle{plain}% Theorem-like structures provided by amsthm.sty
\newtheorem{theorem}{Theorem}[section]

\newtheorem{property}[theorem]{Property}

\theoremstyle{definition}
\newtheorem{definition}[theorem]{Definition}

\theoremstyle{remark}

\def \Y {\mathcal{Y}}
% vector

\def \Zbb {\mathbb{Z}}

\def \X {\mathcal{X}}
\def \F {\mathcal{F}}

\def \F {\mathcal{F}}

\def \X {\mathcal{X}}

\def \Ac {\mathscr{A}}
\def \Bc {\mathscr{B}}

\def \Zbb {\mathbb{Z}}
\def \Wbb {\mathbb{W}}
%\modulolinenumbers[5]

\journal{}

%%%%%%%%%%%%%%%%%%%%%%%
%% Elsevier bibliography styles
%%%%%%%%%%%%%%%%%%%%%%%
%% To change the style, put a % in front of the second line of the current style and
%% remove the % from the second line of the style you would like to use.
%%%%%%%%%%%%%%%%%%%%%%%

%% Numbered
%\bibliographystyle{model1-num-names}

%% Numbered without titles
%\bibliographystyle{model1a-num-names}

%% Harvard
%\bibliographystyle{model2-names.bst}\biboptions{authoryear}

%% Vancouver numbered
%\usepackage{numcompress}\bibliographystyle{model3-num-names}

%% Vancouver name/year
%\usepackage{numcompress}\bibliographystyle{model4-names}\biboptions{authoryear}

%% APA style
%\bibliographystyle{model5-names}\biboptions{authoryear}

%% AMA style
%\usepackage{numcompress}\bibliographystyle{model6-num-names}

%% `Elsevier LaTeX' style
\bibliographystyle{model6-num-names}
%%%%%%%%%%%%%%%%%%%%%%%

\begin{document}

\begin{frontmatter}

\title{Understanding Generalization via Set Theory}%\tnoteref{mytitlenote}}
%\tnotetext[mytitlenote]{Fully documented templates are available in the elsarticle package on \href{http://www.ctan.org/tex-archive/macros/latex/contrib/elsarticle}{CTAN}.}

%% Group authors per affiliation:
%\fnref{myfootnote}
\author[myfootnote]{Shiqi Liu\corref{mycorrespondingauthor}} %\author[mysecondfootnote]{Ting Wang} 
%\author[mysecondfootnote]{Ting chen} \author[mysecondfootnote]{Jintao Lu} \author[mysecondfootnote]{Yu Chen}\author[mysecondfootnote]{Yuxin Yin}
% \author[mysecondfootnote]{Qun Li} \author[mysecondfootnote]{Weiheng Shu}
%\author[mysecondfootnote]{Shenbing Zou}

\address[myfootnote]{shiqi.liu647@foxmail.com, \\Independent Researcher}
%\address[mysecondfootnote]{\{chenting,lujintao,liqun,shuweiheng,wangting\}@diit.cn, \\Innovation Center of Beijing Data Intelligence Information Technology Co., Ltd., Wuhan, Hubei, P.R. China}

%
%%% or include affiliations in footnotes:
\address{}
\cortext[mycorrespondingauthor]{Corresponding author}
%\ead{shenbingzou@gmail.com}
%\author[mysecondaryaddress]{Global Customer Service\corref{mycorrespondingauthor}}

%\address[mymainaddress]{1600 John F Kennedy Boulevard, Philadelphia}
%\address[mysecondaryaddress]{360 Park Avenue South, New York}

\begin{abstract} Generalization is at the core of machine learning models. However, the definition of generalization is not entirely clear. We employ set theory to introduce the concepts of algorithms, hypotheses, and dataset generalization. We analyze the properties of dataset generalization and prove a theorem on surrogate generalization procedures. This theorem leads to our generalization method. Through a generalization experiment on the MNIST dataset, we obtain 13,541 sample bases. When we use the entire training set to evaluate the model's performance, the models achieve an accuracy of 99.945\%. However, if we shift the sample bases or modify the neural network structure, the performance experiences a significant decline. We also identify consistently mispredicted samples and find that they are all challenging examples. The experiments substantiated the accuracy of the generalization definition and the  effectiveness of the proposed methods. Both the set-theoretic deduction and the experiments help us better understand generalization.

\end{abstract}

\begin{keyword}
Generalization \sep Set Theory \sep  Classification
\end{keyword}

\end{frontmatter}

%\linenumbers

\section{Introduction}
In recent times, deep learning has achieved significant breakthroughs across various domains (\cite{brown2020language}, \cite{he2017mask}). However, two challenging questions remain: How do deep networks generalize, and why do they generalize?

Many approaches have attempted to address these questions by leveraging learning theory (\cite{vapnik1999overview}) and Bayesian learning theory (\cite{mcallester1998some}). Nevertheless, these approaches tend to focus on the relationship with data distributions and attempt to explain the gap between training and validation set results. 

In a recent study, \cite{li2023transformers} approached in-context learning as an algorithm learning problem with a statistical perspective. \cite{power2022grokking} delved into the learning process of neural networks, suggesting that generalization can occur even beyond what is traditionally considered overfitting. 

We are the first to attempt an understanding of generalization directly through set theory.

In this paper, our focus shifts from the data distribution to the realm of set theory in order to elucidate the process of generalization. We integrate perspectives from ensemble learning and active learning, presenting a novel generalization framework. Our contributions can be summarized as follows:
\begin{enumerate}
  \item We provide a mathematical theory that describes algorithm generalization on the specific dataset using set theory.
  \item We develop a learning framework that leverages ensemble and active learning techniques to maximize the generalization effect..
  \item We implement our methods on the MNIST dataset and identify an efficient generalization set with a generalization effect nearly equal to that of the entire dataset. The experiment also substantiated the accuracy of the generalization definition and the  effectiveness of the proposed methods.
\end{enumerate}
%\begin{figure}
%  \centering
%  \includegraphics[width=.73\textwidth]{experiment/generalization_pic.jpg}
%  \caption{Generalization via Inconsistency}\label{fig:Generalization via Inconsistency}
%\end{figure}

The paper is structured as follows:
\begin{itemize}
  \item Section 2 introduces the mathematical notations and symbols used for describing generalization properties.
  \item Section 3 illustrates properties related to generalization.
  \item Section 4 discusses various aspects of generalization.
  \item Section 5 presents our proposed generalization method.
  \item Section 6 covers the experimental validation of our method.
  \item Finally, in Section 7, we provide our concluding remarks.
\end{itemize}

%We denote scalar, vector, matrix, tensor and corresponding random variable in
%non-bold case, bold case, bold upper case, calligraphic upper case letters.

%\section{Relation}
%Suppose there is a relation $\sim$ defined on domain $\X$ and domain $\Y$. Then it derived two descent relation $\overset{\rightarrow}{\sim}$ and $\overset{\leftarrow}{\sim}$ satisfying that
%
%
%\begin{enumerate}
%  \item [\textbf{Causality}] $\forall x_\alpha\in \X$ and $\forall y_\alpha\in \Y$, if we have 
%$x_\alpha\sim y_\alpha$ then $x_\alpha \overset{\rightarrow}{\sim} y_\alpha$ and $y_\alpha \overset{\leftarrow}{\sim} x_\alpha$.
%  \item [\textbf{R-Closed}] If $\exists y_\beta \in \Y$ satisfying $x_\alpha\sim y_\beta$,  $x_\alpha \overset{\rightarrow}{\sim} y_\beta$ and $y_\beta \overset{\leftarrow}{\sim} x_\alpha$.
%  \item [\textbf{L-Closed}] If $\exists x_\beta \in \Y$ satisfying $x_\beta\sim y_\alpha$,  $x_\beta \overset{\rightarrow}{\sim} y_\alpha$ and $y_\alpha \overset{\leftarrow}{\sim} x_\beta$.
%  \item if \textbf{R-Closed} and \textbf{L-Closed}, then $x_\beta\sim y_\beta$
%  \item 
%\end{enumerate}

\section{Symbology}

\newcommand{\tac}{\begin{aligned}
T_{\Ac}&:&2^\Zbb\rightarrow 2^\F\\
T_{\Ac}(Z)&=&\{f\in F_{\Ac}|\langle f,z\rangle =0 \forall z\in  Z\}
\end{aligned}}

\newcommand{\za}{\begin{aligned}
Z_\Ac=\{z\in\Zbb|\langle f,z\rangle =0 \forall f\in T_{\Ac}(Z)\}
\end{aligned}}

\begin{table}[ht]
\caption{Symbology Illustration\label{tab:Symbology Illustration}}
\small
\centering
\begin{tabularx}{400pt}{ccccc}
\hline
Symbol          & Definition & Formulation&                                                                                                                \\ \hline
$\X$  &total feature alphabet &   \\ \hline
$\Y$ & total class alphabet&    \\\hline

$\F$ & all the mapping from $\X$ to $\Y$ &$\Y^{\X}$   \\\hline
$\Zbb$ & ideal classification mapping & $\Zbb \in \Y^{\X}$ which map each feature to the oracle class   \\\hline
$\Zbb$ & ideal dataset & $\Zbb=\{(x,y)\in\X\times\Y|\Zbb(x)=y, \forall x\in\X\}$   \\\hline
$Z$ & a dataset subset & $Z\subseteq\Zbb$  \\\hline
$\langle f,z\rangle=0$  & $f(x)=y$ $f\in\mathcal{F}$ $z=(x,y)\in\mathbb{Z}$  \\\hline
$F_{\Ac}\subseteq\F$  &  a hypothesis  of algorithm $\Ac$ & $\mathbb{Z}\in F_{\Ac}$  \\\hline
$T_{\Ac}$  &  a set value mapping & $\tac$ \\\hline
\multirow{2}{*}{$\Ac$}  & \multirow{2}{*}{algorithm} &algorithm $\Ac$ maps $Z$ to an element of $T_{\Ac}(Z)$  \\
 & &associating with a $F_{\Ac}$ and $T_{\Ac}$  \\\hline
\multirow{2}{*}{$Z_\Ac$}  & algorithm $\Ac$'s generalization &\multirow{2}{*}{$\za$}  \\
 &  on $Z$  &  \\\hline
\end{tabularx}
\end{table}

The illustration of symbology is in the tabel~\ref{tab:Symbology Illustration}.
\begin{definition} If $\mathbb{Z}\in F_{\Ac}$, $Z_\Ac=\{z\in\Zbb|\langle f,z\rangle =0 \forall f\in T_{\Ac}(Z)\}$ is the algorithm $\Ac$'s generalization on Z. 
  \end{definition}

\section{Property}
\begin{property}\label{pr:t-contain}$V\subseteq W\subseteq \Zbb\Rightarrow T_\Ac(W)\subseteq T_\Ac(V) $
\end{property}

This property shows that a large dataset produce a smaller feasible hypothesis set.

\begin{property}\label{pr:transition}$V\subseteq W\subseteq \Zbb\Rightarrow V_\Ac\subseteq W_\Ac$
\end{property}

This property demonstrates that the generalization set of a large dataset is larger than that of a small dataset.

\begin{property}\label{pr:no-expansion}$V\subseteq\Zbb\Rightarrow V_\Ac=(V_\Ac)_\Ac$
\end{property}

This property indicates that the generalization of the generalization dataset will not expand.

\begin{property}\label{pr:clamp}$V\subseteq W\subseteq \Zbb \quad and\quad V \subseteq W\subseteq V_\Ac\Rightarrow V_\Ac= W_\Ac$
\end{property}

This is the ``squeeze" property of the generalization dataset.

\begin{property}\label{pr:clamp-2}$V,W\subseteq\Zbb \quad and\quad W\subseteq V_\Ac\Rightarrow W_\Ac\subseteq V_\Ac$
\end{property}

This property demonstrates that the containment relationship is preserved during the generalization process.

\begin{property}\label{pr:emptyset}$T_\Ac(\emptyset)=F_\Ac$
\end{property}

This property demonstrates that the empty set will not affect the original hypothesis.

\begin{property}\label{pr:functioncontain}$V\subseteq\Zbb\Rightarrow T_\Ac(V)\subseteq F_\Ac$
\end{property}
This property indicates that any dataset belonging to the oracle dataset will result in a reduction in hypotheses. 

\begin{property}\label{pr:hypothesis subset}$V\subseteq\Zbb \quad and \quad \Zbb  \in F_\Ac \quad and \quad U=\{z\in\X\times\Y|\langle f,z\rangle=0,\forall f\in T_\Ac(V)\}$
\begin{eqnarray}
&\Rightarrow & U\subseteq\Zbb, U=V_\Ac \\
&\Rightarrow & \Zbb -V_\Ac =\Zbb - U \\
&\Rightarrow & W=\{z\in\X\times\Y|\langle f,z\rangle =0 \quad \forall f\in G \subseteq T_\Ac(V)\}\quad \Zbb-W\subseteq \Zbb-V_\Ac
\end{eqnarray}
\end{property}

This property demonstrates that the consistent set on a subset of hypotheses is larger.

\begin{property}\label{pr:generalization condition} $V\subseteq W\subseteq \Zbb$
\begin{eqnarray}
&\Rightarrow& V_\Ac\subseteq W_\Ac\\
&\Rightarrow& if \quad V_\Ac\subset W_\Ac \quad then\quad w\in W_\Ac-V_\Ac \quad and \quad V_\Ac\subset(V\cup\{w\})_\Ac 
\end{eqnarray}
\end{property}

This property illustrates that adding a sample outside the generalization set can monotonically increase generalization.

\begin{property}\label{pr:different algorithm}$F_\Ac\subset F_\Bc$
\begin{eqnarray}
&\Rightarrow& V_\Bc\subseteq V_\Ac\\
&\Rightarrow& if \quad V_\Bc\subset V_\Ac \quad then \quad w\in V_\Ac-V_\Bc \quad and \quad V_\Bc\subset(V\cup\{w\})_\Bc
\end{eqnarray}
\end{property}

This property reveals that a larger number of hypotheses may have smaller generalization sets, and the addition of a sample outside the generalization set can consistently support further generalization.

\begin{theorem}\label{thm:generalziation via inconsistency} Suppose the ideal dataset and mapping are $\Zbb\in F_\Ac$, the selected dataset is $V$, the selected hypothesises $G\subseteq T_\Ac(V)\subseteq F_\Ac$, $W=\{z\in\X\times\Y|\langle f,z\rangle =0 \quad \forall f\in G \}$, then if $w\in \Zbb-W$, we have $V_\Ac \subset (V\cup{w})_\Ac$. 
  
\end{theorem}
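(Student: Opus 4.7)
The plan is to establish the weak inclusion $V_\Ac \subseteq (V\cup\{w\})_\Ac$ by invoking Property~\ref{pr:transition}, and then to upgrade it to a strict inclusion by exhibiting $w$ itself as an element of $(V\cup\{w\})_\Ac$ that is missing from $V_\Ac$.

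First I would argue $w \in (V\cup\{w\})_\Ac$. Since $w\in \Zbb - W\subseteq \Zbb$, the condition for membership in $(V\cup\{w\})_\Ac$ reduces to checking $\langle f, w\rangle = 0$ for every $f\in T_\Ac(V\cup\{w\})$; but by the very definition of $T_\Ac$, any such $f$ must satisfy $\langle f, z\rangle=0$ on all of $V\cup\{w\}$, and in particular at $z=w$, so this is automatic.

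Next I would show $w\notin V_\Ac$. Because $w\in \Zbb - W$, and $W=\{z\in\X\times\Y\mid \langle f,z\rangle=0\ \forall f\in G\}$, there must exist some $f^{*}\in G$ with $\langle f^{*}, w\rangle\ne 0$. By hypothesis $G\subseteq T_\Ac(V)$, so $f^{*}\in T_\Ac(V)$, which is precisely a witness that $w$ fails the defining condition of $V_\Ac$. Combined with Property~\ref{pr:transition} (which gives $V_\Ac\subseteq (V\cup\{w\})_\Ac$), this produces the strict inclusion $V_\Ac\subset (V\cup\{w\})_\Ac$.

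The main obstacle, if any, is conceptual rather than computational: one has to keep straight the two different roles played by $\Zbb$ (as mapping versus as set), verify that $w$ indeed lies in $\Zbb$ so that the comparison takes place inside the right ambient set, and carefully translate ``$w\notin W$'' into the existence of a distinguishing hypothesis in $G \subseteq T_\Ac(V)$. Once that bookkeeping is done, everything follows directly from the definitions and Property~\ref{pr:transition}, with no further machinery required.
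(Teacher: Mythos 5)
Your proof is correct and follows essentially the same route as the paper's: the weak inclusion $V_\Ac\subseteq (V\cup\{w\})_\Ac$ comes from Property~\ref{pr:transition}, and strictness comes from $w\in\Zbb-W\subseteq\Zbb-V_\Ac$ (you phrase this via an explicit witness $f^{*}\in G\subseteq T_\Ac(V)$ with $\langle f^{*},w\rangle\neq 0$, which is just the element-wise form of the paper's observation that $W\supseteq V_\Ac$). If anything, your version is slightly more complete, since you explicitly check $w\in(V\cup\{w\})_\Ac$, a step the paper leaves implicit.
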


This theorem suggests that we can employ a subset of $T_\Ac(V)$ to identify the consistent set and then locate the inconsistent sample. By adding the inconsistent sample to $V$, the generalization of $V\cup\{w\}$ will be  greater than the generalization of $V$.
%
%\begin{property}$F_\Ac\neq\F_\Bc$ and $F_\Ac\cap F_\Bc= F_{\Ac \cap\Bc}$ $\Rightarrow (V_\Ac \cup V_\Bc)_\Ac=(V_\Bc\cup V_\Ac)_\Bc = V_{\Ac\cap\Bc}$.
%\end{property}
%\begin{proof}$V_\Ac=\{z\in\Zbb|\langle f,z\rangle =0 \forall f\in T_\Ac(V)\}$,$V_\Bc=\{z\in\Zbb|\langle f,z\rangle =0 \forall f\in T_\Bc(V)\}$,$(V_\Bc)_\Ac=\{z\in\Zbb|\langle f,z\rangle =0 \forall f\in T_\Ac(V_\Bc)\}$. $T_\Ac(V_\Bc)=\{f\in F_{\Ac}|\langle f,z\rangle =0 \forall z\in  V_\Bc\}=F_\Ac\cap T_\Bc(V)$
%\end{proof}

\begin{property}Construct $f_{\Ac,Z}$ is the $f$ such that $\{\langle f,z\rangle=0 \forall z\in \Zbb\cap Z \quad and\quad \langle f,z\rangle\neq0 \forall z\in \Zbb- Z\}$ then it yields $\forall f \in T_\Ac(Z)$ the number $\vert z\in \Zbb|\langle f,z\rangle=0 \vert\ge\vert z\in \Zbb|\langle f_{(\Ac,Z},z\rangle=0 \vert $, it means that $$accuracy_{f}=\frac{\vert z\in \Zbb|\langle f,z\rangle=0 \vert}{\vert\Zbb\vert}\ge\frac{\vert z\in \Zbb|\langle f_{\Ac,Z},z\rangle=0 \vert} {\vert\Zbb\vert}=\frac{\vert Z\vert}{\vert \Zbb \vert}=accuracy_{f_{\Ac,Z}}.$$
\end{property} 

This property provides an example of the relationship between the accuracy of a hypothesis in $T_\Ac(Z)$ and $f_{\Ac,Z}$.

\section{Discussion of Generalization}
Generalization situation: an algorithm  knows more samples by learning some samples  .

No generalization situation: an algorithm learns samples but fails to know more samples.

\subsection{Why does the algorithm generalize?}

\begin{theorem}$\Ac$ is the algorithm, and $Z$ is dataset. If there is  $f_{Z}\in\F_\Ac$ such that $\langle f_{Z},z\rangle =0 \forall z\in Z$ and $\langle f_{Z},z\rangle \neq 0 \forall z\in \Zbb-Z$, then $\Ac$ will not generalize on $Z$.
\end{theorem}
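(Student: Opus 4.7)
The plan is to unpack the definition of generalization and show that the hypothetical $f_Z$ acts as a ``witness'' inside $T_\Ac(Z)$ that blocks any expansion beyond $Z$, i.e.\ forces $Z_\Ac = Z$, which is precisely the ``no generalization'' situation from the preceding discussion.

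First I would verify that $f_Z \in T_\Ac(Z)$. By hypothesis $f_Z \in F_\Ac$ and $\langle f_Z, z\rangle = 0$ for every $z \in Z$, so $f_Z$ satisfies the defining conditions of $T_\Ac(Z) = \{ f \in F_\Ac \mid \langle f, z\rangle = 0 \ \forall z \in Z\}$. Next I would establish the inclusion $Z \subseteq Z_\Ac$: assuming $Z \subseteq \Zbb$, for any $z \in Z$ every $f \in T_\Ac(Z)$ satisfies $\langle f, z\rangle = 0$ by construction, so $z \in Z_\Ac$. (This step only uses the definition of $T_\Ac$ and $Z_\Ac$ and does not depend on the special $f_Z$.)

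The key step is the reverse inclusion $Z_\Ac \subseteq Z$. Take any $z \in \Zbb - Z$; the hypothesis gives $\langle f_Z, z\rangle \neq 0$, and since $f_Z \in T_\Ac(Z)$, $z$ fails the defining condition of $Z_\Ac$, hence $z \notin Z_\Ac$. Combining both inclusions yields $Z_\Ac = Z$, so no new sample outside $Z$ is pinned down by the hypotheses consistent with $Z$; following the discussion that precedes the theorem, this is exactly the ``no generalization'' situation.

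I do not expect a real obstacle here, since the argument is a direct unfolding of the definition of $Z_\Ac$; the only mildly delicate point is remembering to use $Z \subseteq \Zbb$ (which is built into the notation $Z \subseteq \Zbb$ used throughout the paper) when asserting $Z \subseteq Z_\Ac$. If the authors intend ``not generalize'' in the strict sense $Z_\Ac \subsetneq \Zbb$ rather than $Z_\Ac = Z$, the same argument still works provided $\Zbb - Z \neq \emptyset$, and the conclusion $Z_\Ac = Z$ is in fact the strongest form of non-generalization one could ask for.
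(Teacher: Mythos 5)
Your proposal is correct and follows essentially the same route as the paper: both verify $f_Z \in T_\Ac(Z)$ and then use $f_Z$ as the witness forcing $Z_\Ac \subseteq \{z\in\Zbb \mid \langle f_Z,z\rangle =0\} = Z$. The only difference is that you also record the easy reverse inclusion $Z \subseteq Z_\Ac$ to pin down $Z_\Ac = Z$ exactly, a small strengthening the paper omits but which changes nothing in the argument.
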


\begin{proof}
  $f_Z\in T_\Ac(Z)=\{f\in F_\Ac|\langle f,z\rangle =0 \forall z\in Z\}.$
  
  Algorithm $\Ac$'s generalization on $Z$ is $Z_\Ac=\{z \in\Zbb|\langle f,z\rangle=0, f\in T_\Ac(Z) \}\subset \{z \in\Zbb|\langle f_Z,z\rangle=0  \}= Z.$
  
  Therefore there is no generalization for $\Ac$ on $Z$.
\end{proof}

\begin{theorem}$\Ac$ is the algorithm, and $Z$ is dataset. Let $\Zbb-Z=U\cap V$. If there is  $f_{Z\cup U}\in\Ac$ such that $\langle f_{Z\cup U},z\rangle =0 \forall z\in Z\cup U$ and  $\langle f_{Z\cup U},z\rangle \neq 0 \forall z\in V $. And there is  $f_{Z\cup V}\in\Ac$ such that $\langle f_{Z\cup V},z\rangle =0 \forall z\in Z\cup V$ and  $\langle f_{Z\cup V},z\rangle \neq 0 \forall z\i U $ then $\Ac$ will not generalize on $Z$.
\end{theorem}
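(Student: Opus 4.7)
The plan is to argue directly from the definition of $Z_\Ac$ that $Z_\Ac = Z$, so that the algorithm labels no sample beyond its training set. Before starting, I would reread the hypothesis $\Zbb - Z = U \cap V$ as the natural union $\Zbb - Z = U \cup V$: with a literal intersection, the two consistency/inconsistency conditions would force $U \cap V = \emptyset$ (a single point in $U\cap V$ cannot be both a zero of $f_{Z\cup U}$ and a non-zero of it), which collapses the statement. The union reading instead partitions the outside of $Z$ into two pieces, each of which is ``refuted'' by one of the two witness hypotheses, which is exactly what the argument needs.

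The first step is to place $f_{Z\cup U}$ and $f_{Z\cup V}$ inside $T_\Ac(Z)$. Since $Z \subseteq Z\cup U$, the consistency condition $\langle f_{Z\cup U}, z\rangle = 0$ for all $z \in Z\cup U$ in particular holds for all $z \in Z$, so $f_{Z\cup U} \in T_\Ac(Z)$; symmetrically $f_{Z\cup V} \in T_\Ac(Z)$. This is a one-line application of Property~\ref{pr:t-contain}.

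The second step is to rule out every point outside $Z$ from $Z_\Ac$. For any $z^* \in \Zbb - Z$, the union reading gives $z^* \in U$ or $z^* \in V$. In the first case $f_{Z\cup V}$ witnesses $\langle f_{Z\cup V}, z^*\rangle \neq 0$; in the second case $f_{Z\cup U}$ does. Since in either subcase the witnessing hypothesis lies in $T_\Ac(Z)$ by step one, $z^*$ fails the universal quantifier in the definition $Z_\Ac = \{z\in\Zbb \mid \langle f,z\rangle = 0\ \forall f\in T_\Ac(Z)\}$, hence $z^* \notin Z_\Ac$.

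Combining with the trivial inclusion $Z \subseteq Z_\Ac$ (every $f\in T_\Ac(Z)$ by construction zeroes out on $Z$) yields $Z_\Ac = Z$, which is the statement that $\Ac$ does not generalize on $Z$. The only real obstacle I expect is disentangling the $U\cap V$/$U\cup V$ typo and making explicit that $U$ and $V$ must jointly cover $\Zbb-Z$; once that is settled the argument is a direct unwinding of the symbology in Table~\ref{tab:Symbology Illustration}.
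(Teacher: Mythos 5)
Your proposal is correct and follows essentially the same route as the paper's own proof: both place $f_{Z\cup U}$ and $f_{Z\cup V}$ in $T_\Ac(Z)$ and then observe that the intersection of their consistency sets is exactly $Z$, so $Z_\Ac\subseteq Z$. Your explicit handling of the $U\cap V$ versus $U\cup V$ typo is a sensible clarification that the paper leaves implicit (its displayed intersection $\{z\in\Zbb\mid\langle f_{Z\cup V},z\rangle=0\ \text{and}\ \langle f_{Z\cup U},z\rangle=0\}=Z$ only holds under the covering reading you adopt).
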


\begin{proof}
  $f_{Z\cup V},f_{Z\cup U}\in T_\Ac(Z)=\{f\in F_\Ac|\langle f,z\rangle =0 \forall z\in Z\}.$
  
  Algorithm $\Ac$'s generalization on $Z$ is $Z_\Ac=\{z \in\Zbb|\langle f,z\rangle=0 f\in T_\Ac(Z) \}\subset \{z \in\Zbb|\langle f_{Z\cup V},z\rangle=0 \quad and \quad \langle f_{Z\cup U},z\rangle=0 \}= Z.$
  
  Therefore there is no generalization for $\Ac$ on $Z$.
\end{proof}

Algorithms generalize based on the structural or quantitative constraints of the assumed hypotheses. If for all $Z$, exists $f_{Z}\in\Ac$ such that $\langle f_{Z},z\rangle =0 \forall z\in Z$ and $\langle f_{Z},z\rangle \neq 0 \forall z\in Z^C$, then algorithm $\Ac$ will not generalize on any dataset.

As we desire algorithms to possess generalization capabilities, such specific structures within $F_\Ac$ should ideally not occur.

\subsection{Neural network hypothesises}
Suppose the structure  consists of  fixed backbones neural network $$F_\Ac=\{f_w|w\in\Wbb\}.$$ In each round, we train a neural network using gradient descent to adjust its parameters until it achieves 100\% accuracy on the training set. This procedure can be seen as a continuous path algorithm and eliminates all hypotheses that do not belong to $T_{\Ac}(Z)$.

Therefore, we can use neural networks trained from several different initial points to create a surrogate sample of $T_{\Ac}(Z)$. In fact, to obtain $T_{\Ac}(Z)$, we need to iterate through all parameters $w\in \Wbb$ to identify their mappings that satisfy $\langle f_w,z\rangle=0$ for all $z\in Z$.

\section{Methodology}

Building upon theorem~\ref{thm:generalziation via inconsistency}, we can develop an algorithm to generalize via inconsistency by utilizing the set property.

First, we begin with an initial dataset, denoted as $Z_{initial}$, and the entire data pool, denoted as $Z$. We have a total of $n$ neural networks. We set $Z_{start}$ to be equal to $Z_{initial}$.

Next, we train all $n$ neural networks to achieve 100\% accuracy on the dataset $Z_{start}$. This results in the set $\F_n$.

Third, we can calculate the consistent set $Z_{consistent,n,start}=\{z\in\Zbb|\langle f,z\rangle =0 \forall f\in \F_{n}\}$ concerning the $n$ networks.

Fourth, then the generalized ${Z_{start}}_\Ac \subseteq Z_{consistent,n,start}$. Then the generalizing-effective samples belong to ${Z_{start}}_\Ac^c \supseteq Z_{consistent,n,start}^c $. 

Fifth, select one sample $z_{ungeneralized}\in Z_{consistent,n,start}^c\subseteq {Z_{start}}_\Ac^c  $. 

Sixth, construct $Z_{start}=Z_{start}\cup \{z_{ungeneralized}\}$. 

Repeat the steps from second to the sixth until $Z_{consistent,n,start}=Z$.

%The process's illustration is in figure~\ref{fig:Generalization via Inconsistency}.

\section{Experiments and results}
We endeavor to identify efficient bases for a specific dataset using the proposed methods. The term "efficient bases" refers to bases on which the algorithm's generalization can encompass the entire dataset.
\subsection{Datasets}
We utilize the training dataset of MNIST, comprising 60,000 image-label pairs of various handwritten numbers. The specific network structures chosen for the experiments are detailed in the appendix.~\ref{tab:Network Structure}.

\subsection{Quantitative Result}
Based on the experimental data, 10 selected samples from each class have been added to $Z_{initial}$. Additionally, the chosen network structure is L(10,100), which indicates that the fully connected layers have a latent dimension of 100, and the number of classes is 10.

By adding  approximately 13531 samples into $Z_{initial}$, $Z_{13531}$ becomes the bases of 60000 image-label pairs .

This suggests that only 13541 carefully selected sample are needed to enable all 10 models to achieve $100\%$ consistency on the entire dataset. The experiment substantiated the accuracy of the generalization definition and the effectiveness of the proposed methods.

\subsection{Generalization on sample bases shift}

\begin{table}[]
\caption{Generalization on sample bases shift and structure modification}
\label{tab:Generalization on samples basis shift}
\small
\centering
\begin{tabular}{ccccccc}
\hline
Network Structure   
    &Used Data   &Accuracy   \\ 
\hline
L(10,100)  & The selected 13541 bases& $\mathbf{99.94516}\pm 0.00801\%$           \\ \hline
L(10,100)  & The first 13541 samples &  $97.11383\pm 0.17862\%$              \\ \hline
L(10,1000)& The selected 13541 bases & $99.90716\pm 0.02316\%$              \\ \hline
L(10,50)  & The selected 13541 bases &  $99.92100\pm 0.02226\%$         \\ \hline
\end{tabular}
\end{table}

We utilize these 13541 samples to train new models. Their total accuracy on the 60000 samples is 99.945\%(It implies only about 33 samples are mispredicted). However, if we use the first 13541 sample to train new models, the accuracy can only be 97.114\%(It implies about 1732 samples are mispredicted). These results demonstrate the selective 13541 bases have better generalization ability with respect to the same network structure.

\subsection{Generalization on neural nets structure modification}
When we modify the neural network structure to a more complex one, such as $L(10,1000)$, the total accuracy on the dataset of 60,000 samples decreases to 99.90716\%.
When we modify the neural network structure to a simpler one, such as $L(10,50)$, the total accuracy on the dataset of 60,000 samples decreases to 99.9210\%.
This suggests that the choice of basis samples is closely related to the neural network structure.
\subsection{Consistent mispredictions}
\begin{figure}[]
\centering
\captionsetup[subfigure]{font=tiny}
\begin{subfigure}{.13\textwidth}
	\centering
\includegraphics{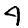}
	\caption{$4\rightarrow 9$}
\end{subfigure}
\hfill
\begin{subfigure}{.13\textwidth}
	\centering
\includegraphics{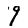}
	\caption{$9\rightarrow 7$}
\end{subfigure}
\hfill
\begin{subfigure}{.13\textwidth}
	\centering
\includegraphics{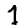}
	\caption{$7\rightarrow 1$}
\end{subfigure}
\hfill
\begin{subfigure}{.13\textwidth}
	\centering
\includegraphics{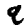}
	\caption{$9\rightarrow 8$}
\end{subfigure}
\hfill
\begin{subfigure}{.13\textwidth}
	\centering
\includegraphics{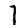}
	\caption{$7\rightarrow 1$}
\end{subfigure}
\hfill
\begin{subfigure}{.13\textwidth}
	\centering
\includegraphics{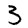}
	\caption{$5\rightarrow 3$}
\end{subfigure}
\hfill
\begin{subfigure}{.13\textwidth}
	\centering
\includegraphics{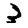}
	\caption{$3\rightarrow 2$}
\end{subfigure}
\hfill
\begin{subfigure}{.13\textwidth}
	\centering
\includegraphics{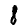}
	\caption{$8\rightarrow 1$}
\end{subfigure}
\hfill
\begin{subfigure}{.13\textwidth}
	\centering
\includegraphics{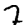}
	\caption{$7\rightarrow 2$}
\end{subfigure}
\hfill
\begin{subfigure}{.13\textwidth}
	\centering
\includegraphics{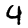}
	\caption{$9\rightarrow 4$}
\end{subfigure}
\hfill
\begin{subfigure}{.13\textwidth}
	\centering
\includegraphics{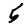}
	\caption{$5\rightarrow 6$}
\end{subfigure}
\hfill\begin{subfigure}{.13\textwidth}
	\centering
\includegraphics{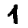}
	\caption{$4\rightarrow 1$}
\end{subfigure}
\hfill
\begin{subfigure}{.13\textwidth}
	\centering
\includegraphics{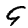}
	\caption{$9\rightarrow 5$}
\end{subfigure}
\hfill
\begin{subfigure}{.13\textwidth}
	\centering
\includegraphics{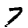}
	\caption{$4\rightarrow 7$}
\end{subfigure}
\hfill
\caption{Consistent but wrong predicted samples.\label{fig:compare consistent but false}The first one is the oracle label, the second one are the consistent predicted label.}
\end{figure}
As shown in figure~\ref{fig:compare consistent but false}, the consistently mispredicted samples are challenging cases. These samples may even lead to differing opinions among individuals regarding their correct labels.
  
\section{Conclusion}
In our study of the generalization properties of algorithms, the generalization step can be seen as a process of hypothesis elimination, ultimately leading to the attainment of a larger consistent set. We've discussed specific cases where an algorithm is unable to generalize effectively. Furthermore, our focus has been on the scenario where the hypotheses are neural networks.

Through experiments conducted on the MNIST dataset, we have observed that only about 13,541 samples forming the basis can enable a neural network to achieve an accuracy of 99.945\% on the training set. However, if we use the first 13,541 samples, the accuracy drops to only 97.114\%. Additionally, modifying the neural network structure leads to a decrease in accuracy. These findings have inspired us to consider more efficient data labeling strategies, which can provide deeper insights into the hypothesis space of neural networks.
The experiment also substantiated the accuracy of the generalization definition and the effectiveness of the proposed methods.

A limitation of our current methods is that they do not consider the topology of the hypothesis space and data space. In future research, we plan to explore the impact of the topology of both the hypothesis and data spaces. Additionally, we aim to extend our study to autoregressive models and other types of models to gain a more comprehensive understanding of generalization.

Furthermore, the discussion of generalization is currently based on the intersection of all feasible hypotheses, which is a strict definition. We recognize that it may be necessary to generalize this definition to encompass more general cases and provide a broader perspective on the concept of generalization.

\subsection{Future Direction}

The generalization theory presented can indeed be extended to large language models. For each valid sample $z$, if it is determined by its $k$ tokens, then we can input it into the large language model with those $k$ tokens. If the model $f$ correctly generates the complete sample using a greedy strategy, then it implies that $\langle f,z\rangle=0$. This suggests that the proposed theory could be applied to address questions related to the generalization of large language models. For instance, in-context learning could be regarded as a selection within the sample space.
 
\section*{Acknowledgement(s)}
 We would like to thank Ting Wang, Ting chen, Jintao Lu, Yu Chen, Yuxin Yin, Qun Li, Weiheng Shu, Shenbing Zou for valuable discussion.

\bibliography{mybibfile}
\appendix

\section{Property Proof}
\noindent\textbf{property~\ref{pr:t-contain}} $V\subseteq W\subseteq \Zbb\Rightarrow T_\Ac(W)\subseteq T_\Ac(V) $

\begin{proof}
  $T_\Ac(W)=\{f\in F_{\Ac}|\langle f,z\rangle =0 \forall z\in  W\} \subseteq \{f\in F_{\Ac}|\langle f,z\rangle =0 \forall z\in  V\}=T_\Ac(V)$
\end{proof}

\noindent\textbf{property~\ref{pr:transition}} $V\subseteq W\subseteq \Zbb\Rightarrow V_\Ac\subseteq W_\Ac$

\begin{proof}
  $V_\Ac=\{z\in\Zbb|\langle f,z\rangle =0 \forall f\in T_{\Ac}(V)\}\subseteq\{z\in\Zbb|\langle f,z\rangle =0 \forall f\in T_{\Ac}(W)=W_\Ac$
\end{proof}

\noindent\textbf{property~\ref{pr:no-expansion}} $V\subseteq\Zbb\Rightarrow V_\Ac=(V_\Ac)_\Ac$

\begin{proof}
$T_\Ac(V_\Ac)=\{f\in F_{\Ac}|\langle f,z\rangle =0 \forall z\in  V_\Ac\}=\{f\in F_{\Ac}|\langle f,z\rangle =0 \forall z\in  V\}$
${V_\Ac}_\Ac=\{z\in\Zbb|\langle f,z\rangle =0 \forall f\in T_\Ac(V_\Ac)\}=\{z\in\Zbb|\langle f,z\rangle =0 \forall f\in T_\Ac(V)\}=V_\Ac.$
\end{proof}

\noindent\textbf{property~\ref{pr:clamp}} $V\subseteq W\subseteq \Zbb \quad and\quad V \subseteq W\subseteq V_\Ac\Rightarrow V_\Ac= W_\Ac$

\begin{proof}
According to property~\ref{pr:no-expansion},
${V_\Ac}_\Ac=V_\Ac$
Therefore, $V \subseteq W\subseteq V_\Ac \Rightarrow V_\Ac=\{z\in\Zbb|\langle f,z\rangle =0 \forall f\in T_\Ac(V_\Ac)\}\subseteq \{z\in\Zbb|\langle f,z\rangle =0 \forall f\in T_\Ac(W)\}= W_\Ac\subseteq \{z\in\Zbb|\langle f,z\rangle =0 \forall f\in T_\Ac(V)\}\Rightarrow W_\Ac=V_\Ac.$
\end{proof}

\noindent\textbf{property~\ref{pr:clamp-2}} $V,W\subseteq\Zbb \quad and\quad W\subseteq V_\Ac\Rightarrow W_\Ac\subseteq V_\Ac$

\begin{proof}
  According to property~\ref{pr:no-expansion} and~\ref{pr:transition},  $W\subseteq V_\Ac\Rightarrow W_\Ac\subseteq {V_\Ac}_\Ac \Rightarrow  W_\Ac\subseteq {V_\Ac}.$
\end{proof}

\noindent\textbf{property~\ref{pr:emptyset}} $T_\Ac(\emptyset)=F_\Ac$

\begin{proof}
 $T_\Ac(\emptyset)=\{f\in F_{\Ac}|\langle f,z\rangle =0 \forall z\in  \emptyset\}=F_\Ac$
\end{proof}

\noindent\textbf{property~\ref{pr:functioncontain}} $V\subseteq\Zbb\Rightarrow T_\Ac(V)\subseteq F_\Ac$

\begin{proof}
  $T_\Ac(V)=\{f\in F_{\Ac}|\langle f,z\rangle =0 \forall z\in  V\}\subseteq F_\Ac$
\end{proof}

\noindent\textbf{property~\ref{pr:hypothesis subset}} $V\subseteq\Zbb \quad and \quad \Zbb  \in F_\Ac \quad and \quad U=\{z\in\X\times\Y|\langle f,z\rangle=0,\forall f\in T_\Ac(V)\}$
\begin{eqnarray}
&\Rightarrow & U\subseteq\Zbb, U=V_\Ac \\
&\Rightarrow & \Zbb -V_\Ac =\Zbb - U \\
&\Rightarrow & W=\{z\in\X\times\Y|\langle f,z\rangle =0 \quad \forall f\in G \subseteq T_\Ac(V)\}\quad \Zbb-W\subseteq \Zbb-V_\Ac
\end{eqnarray}

\begin{proof}
Since  $W\supseteq V_\Ac$, it yields $\Zbb-W\subseteq \Zbb-V_\Ac$.
\end{proof}

\noindent\textbf{property~\ref{pr:generalization condition}} $V\subseteq W\subseteq \Zbb$
\begin{eqnarray}
&\Rightarrow& V_\Ac\subseteq W_\Ac\\
&\Rightarrow& if \quad V_\Ac\subset W_\Ac \quad then\quad w\in W_\Ac-V_\Ac \quad and \quad V_\Ac\subset(V\cup\{w\})_\Ac 
\end{eqnarray}

\begin{proof}
 Since $w\in V\cup\{w\}$, $w\notin V_\Ac$ and $V_\Ac\subseteq (V\cup\{w\})_\Ac$, it yields  $V_\Ac\subset(V\cup\{w\})_\Ac$.
\end{proof}

\noindent\textbf{property~\ref{pr:different algorithm}}$F_\Ac\subset F_\Bc$
\begin{eqnarray}
&\Rightarrow& V_\Bc\subseteq V_\Ac\\
&\Rightarrow& if \quad V_\Bc\subset V_\Ac \quad then \quad w\in V_\Ac-V_\Bc \quad and \quad V_\Bc\subset(V\cup\{w\})_\Bc
\end{eqnarray}

\begin{proof}
   Since $w\in V\cup\{w\}$, $w\notin V_\Bc$ and $V_\Bc\subseteq (V\cup\{w\})_\Bc$, it yields  $V_\Bc\subset(V\cup\{w\})_\Bc$.
\end{proof}

\noindent\textbf{theorem~\ref{thm:generalziation via inconsistency}} Suppose the ideal dataset and mapping are $\Zbb\in F_\Ac$, the selected dataset is $V$, the selected hypothesises $G\subseteq T_\Ac(V)\subseteq F_\Ac$, $W=\{z\in\X\times\Y|\langle f,z\rangle =0 \quad \forall f\in G \}$, then if $w\in \Zbb-W$, we have $V_\Ac \subset (V\cup{w})_\Ac$. 

\begin{proof}
Since  $W\supseteq V_\Ac$, it yields $w\in \Zbb-W \subseteq \Zbb-V_\Ac$. Since $V_\Ac\subseteq (V\cup\{w\})_\Ac$ and $w \notin V_\Ac$, it yields $V_\Ac \subset (V\cup{w})_\Ac$.
\end{proof}

\section{Set value mapping concepts and properties}
\subsection{Set computation}
\begin{property}$T_\Ac(z):=\{f\in F_\Ac |\langle f,z\rangle =0\} \quad and\quad V\subseteq\Zbb\Rightarrow T_\Ac(V)=\cap_{{v\in V}}T_\Ac(v)$
\end{property}

\begin{property}$V,W\subseteq \Zbb \Rightarrow T_\Ac(V\cup W)=T_\Ac(V)\cap T_\Ac(W)$
\end{property}

\begin{property}$V^\alpha\subseteq \Zbb \Rightarrow T_\Ac(\cup_\alpha V^\alpha)=\cap_{\alpha} T_\Ac(V^\alpha)$
\end{property}

\begin{property}$V,W\subseteq \Zbb \Rightarrow T_\Ac(V\cap W)\supseteq T_\Ac(V)\cup T_\Ac(W)$
\end{property}

\begin{property}$V^\alpha\subseteq \Zbb \Rightarrow T_\Ac(\cap_\alpha V^\alpha)\supseteq\cup_{\alpha} T_\Ac(V^\alpha)$
\end{property}

\begin{property}$V\subseteq \Zbb\Rightarrow$ $T_\Ac(V\cup (\Zbb-V))=T_\Ac(V)\cap T_\Ac(\Zbb-V)$
\end{property}

\begin{property}$S_\Ac(z):=\{f\in F_\Ac |\langle f,z\rangle \neq0\} \quad and\quad V\subseteq\Zbb\Rightarrow S_\Ac(V)=\cup_{{v\in V}}S_\Ac(v)$
\end{property}

\begin{property}$V,W\subseteq \Zbb \Rightarrow S_\Ac(V\cap W)\subseteq S_\Ac(V)\cap S_\Ac(W)$
\end{property}

\begin{property}$V^\alpha\subseteq \Zbb \Rightarrow S_\Ac(\cap_\alpha V^\alpha)\subseteq\cap_{\alpha} S_\Ac(V^\alpha)$
\end{property}

\begin{property}$V,W\subseteq \Zbb \Rightarrow S_\Ac(V\cup W)= S_\Ac(V)\cup S_\Ac(W)$
\end{property}

\begin{property}$V^\alpha\subseteq \Zbb \Rightarrow S_\Ac(\cup_\alpha V^\alpha)=\cup_{\alpha} S_\Ac(V^\alpha)$
\end{property}

\begin{property}$V\subseteq \Zbb\Rightarrow$ $S_\Ac(V\cup (\Zbb-V))=S_\Ac(V)\cup S_\Ac(\Zbb-V)$
\end{property}

\begin{property}$V\subseteq \Zbb\Rightarrow$ $S_\Ac(\Zbb-V)\supseteq S_\Ac(\Zbb)- S_\Ac(V)$
\end{property}

\subsection{Inverse of set value mapping}
\begin{definition}[Point function inverse. Correct points predicted by f]
\begin{equation}
T_\Ac^{-1}(f)=\{z\in\Zbb|f\in T_\Ac^{-1}(z)\}
\end{equation}
\end{definition}

\begin{definition}
\begin{equation}
T_{\Ac\supseteq}^{-1}(F) = \{ z\in\Zbb | f\in T_\Ac^{-1}(z) \}
\end{equation}
\end{definition}

\section{Neural network structure}
\begin{table}[]
\caption{Network Structure}
\centering
\label{tab:Network Structure}
\small
\begin{tabular}{ccccccc}
\hline
Dataset& Optimiser & Architecture   
    &  Description  \\ 
\hline
MNIST&Adam&Input  & 28$\times$28$\times$1         \\ 
&$1e-3$&  & Conv $32\times4\times4$ stride $2\times2$ & ReLU activation          \\ 
&&  &  Conv $64\times4\times4$ stride $2\times2$ & ReLU activation          \\
& && FC 100 or 50 or 1000 &ReLU activation                \\ 
&Step 10000&Predictor  & FC 10 &Softmax activation            \\ \hline
\end{tabular}
\end{table}
\end{document}